%
%
%
%
%
\documentclass{article}
%
%
\usepackage{graphicx}
%
%
\usepackage{amsfonts}
\usepackage{amsthm}
%
%
%

\title{Quadratically constrained quadratic programming for classification using particle swarms and applications
}


\author{Deepak Kumar, A G Ramakrishnan\\ 
Medical Intelligence and Language Engineering (MILE) Laboratory,\\Department of Electrical Engineering,\\Indian Institute of Science, Bangalore - 560012, India\\ Tel.: +91-80-22932935\\Fax: +91-80-23600444\\
Email:dipkmr@gmail.com,ramkiag@ee.iisc.ernet.in }



\date{}


\begin{document}

\maketitle

\begin{abstract}
Particle swarm optimization is used in several combinatorial optimization problems. In this work, particle swarms are used to solve quadratic programming problems with quadratic constraints. The approach of particle swarms is an example for interior point methods in optimization as an iterative technique. This approach is novel and deals with classification problems without the use of a traditional classifier. Our method determines the optimal hyperplane or classification boundary for a data set. In a binary classification problem, we constrain each class as a cluster, which is enclosed by an ellipsoid. The estimation of the optimal hyperplane between the two clusters is posed as a quadratically constrained quadratic problem. The optimization problem is solved in distributed format using modified particle swarms. Our method has the advantage of using the direction towards optimal solution rather than searching the entire feasible region. Our results on the Iris, Pima, Wine, and Thyroid datasets show that the proposed method works better than a neural network and the performance is close to that of SVM. 

\smallskip
\noindent \textbf{Keywords} Quadratic programming; Particle swarms; Hyperplane; Quadratic constraints; Binary classification.
\end{abstract}

\section{Introduction}
\label{intro}

A class of algorithms originated for minimizing or maximizing a function $f(\mathbf{x})$, while satisfying some constraints $g(\mathbf{x})$. In the history of optimization, the function $f(\mathbf{x})$ and the constraints $g(\mathbf{x})$ were linear and the problem was known as linear programming (LP). One of the early published algorithms for solving the linear programming was given by Dantzig, popularly known as Simplex method \cite{dantzig}. As the number of dimensions and constraints increased, solving the LP using simplex method became hard. The inability of simplex method was that it could not solve LP in polynomial time. Khachiyan proposed ellipsoid algorithm as an alternative to simplex method and proved that it could reach the solution iteratively in polynomial time \cite{khachiyan}. The practical infeasible condition of ellipsoid algorithm led to the evolution of  several interior or barrier point methods. One of the well known interior point methods is Karmarkar's method proposed by Narendra Karmarkar \cite{karmarkar}. 


Binary classification is one of the active research areas in machine learning \cite{bishop,duda}. There are several ways to train a binary classifier. The class labels of a data set can be stored and retrieved during classification using the approach of nearest neighbor \cite{derrac}. A hyperplane is learnt for classification by training a neural network, which may not always be optimal \cite{bishop2}. Vapnik and others formulated the problem of classification as optimization. This method is known as support vector machines (SVMs) \cite{vapnik}. Sequential minimal optimization (SMO) is a technique which solves the optimization problem in SVMs \cite{platt}. Decision trees, Bagging, and Boosting techniques are also used in binary classification \cite{duda,fernandez,galar,lopez}.


\subsection{Motivation}
Nearest neighbor method does not involve any modeling to reduce the storage of the training data set \cite{derrac}. On the other hand, neural network and SVM model the data with an objective function to estimate a hyperplane, which is used in classification. In neural network approach, the objective function is a least squares, which is quadratic in nature and is minimized for the given data set. The hyperplane obtained from a neural network may or may not be optimal, since it depends on the number of layers and weights used to train the network. SVM uses quadratic programming formulation with linear constraints for minimizing the objective function \cite{gonzalez2013}. Several variants of Even though the objective function used in neural network or SVM is a quadratic programming problem, the constraints are linear. 

If there is a way to model linear constraints as quadratic constraints, then the objective function becomes quadratically constrained quadratic programming (QCQP). In this paper, binary classification is posed as a QCQP problem and a novel solution is proposed using particle swarm optimization (PSO). One of the advantages of this approach is that it solves the QCQP problem without the need for gradient estimation.



The paper is organized as follows: QCQP and PSO are described in the background section and the solution for quadratically constrained quadratic programming using particle swarms is described in Sec. 3. The proposed method is compared with Khachiyan's and Karmarkar's algorithms for linear programming and with neural networks and SVM for quadratic programming in Sec. 4 on experiments and results. Section 5 concludes the paper with suggestions for future work.

\section{Background}
The formulation of QCQP and PSO are described in the subsections below. 

\subsection{Quadratically Constrained Quadratic Programming}
A general quadratically constrained quadratic programming problem is expressed as follows \cite{bomze2,bomze,boyd}:
\begin{equation}
\begin{array}{c c c}
minimize & f(\mathbf{x})~=~\mathbf{x}^TP_0\mathbf{x} + 2q_0\mathbf{x} + r_0 & \\
subject~to & \mathbf{x}^TP_i\mathbf{x} + 2q_i\mathbf{x} + r_i~\geq~0 & \forall i \in 1,2...m
\end{array}
\end{equation} 
where $\mathbf{x}~\in~\mathbb{R}^n$. 
If $P_i~$ are positive semi-definite $~\forall~i~\in~0,1,2...m$, then the problem becomes convex QCQP.



\subsection{Particle Swarm Optimization}
Particle swarm optimization was proposed for optimizing in the weights space of a neural network \cite{kennedy}. PSO has been applied to numerous applications for optimizing non-linear functions \cite{kennedybook,spadoni,zhan}. PSO evolved by simulating bird flocking and fish schooling. The advantages of PSO are that it is simple in conception and easy to implement. Particles are deployed in search space and each particle is evaluated against an optimization function. The best particle is chosen as a directing agent for the rest. The velocity of each particle is controlled by both the particle's personal best and the global best. During the movement of the particles, a few of them may reach the global best. The movement of particles is adapted from the genetic algorithms or evolutionary programming.

Let \textbf{X} = $\{\mathbf{x}_1, \mathbf{x}_2, ... \mathbf{x}_k\}$ be the particles deployed in the search space of the optimization function, where $k$ is the number of particles and \textbf{V} = $\{\mathbf{v}_1, \mathbf{v}_2, ... \mathbf{v}_k\}$ are the velocities of the respective particles. $\mathbf{x}_i,~\mathbf{v}_i~\in~\mathbb{R}^n$ for all the $k$ particles. A simple PSO update is as follows,
\begin{itemize}
\item Velocity update equation \\
\begin{equation}
\mathbf{v}_i^j = w\mathbf{v}_i^{j-1} + c_1r_1~(\mathbf{x}_{bi} - \mathbf{x}_i^{j-1}) + c_2r_2(\mathbf{x}_{bg} - \mathbf{x}_i^{j-1})
\end{equation}
where $w$ is the weight for the previous velocity; $c_1$, $c_2$ are constants and $r_1$, $r_2$ are random values varied in each iteration. $\mathbf{x}_{bi}$ is the personal best value for particle $i$ and $\mathbf{x}_{bg}$ is the global best value among all the particles. $\mathbf{v}_i^j$ is the updated velocity of the $i^{th}$ particle in the $j^{th}$ iteration and $\mathbf{v}_i^{j-1}$ is the velocity value in the $(j-1)^{th}$ iteration. $\mathbf{x}_i^{j-1}$ is the position of the $i^{th}$ particle after the $(j-1)^{th}$ iteration.
\item For position update, the updated velocity is added to the existing position of the particle. The position update equation is \\
\begin{equation}
\mathbf{x}_i^j = \mathbf{x}_i^{j-1} + \mathbf{v}_i^{j} 
\end{equation}
\end{itemize}

\section{Proposed method}
Our interest lies in determining the shortest path between the two non-intersecting ellipsoids. The shortest path between the two ellipsoids in $\mathbb{R}^n$ can be formulated as a convex QCQP problem. 

\subsection{Formulation}
Arriving at the two end points of the shortest path, one on each ellipsoid, can be posed as minimization of $f(\mathbf{x})$ in a quadratic form and formulated as follows.
\begin{equation}
\begin{array}{c c c}
minimize & f(\mathbf{x})~=~(\mathbf{x}-\mathbf{y})^TP_0(\mathbf{x}-\mathbf{y}) \\ 
subject~to & \mathbf{x}^TP_1\mathbf{x} \leq 1 & \mathbf{x} \in X\\
 & \mathbf{y}^TP_2\mathbf{y} \leq 1 & \mathbf{y} \in Y
\end{array}
\end{equation}
where $\mathbf{x},\mathbf{y}~\in~\mathbb{R}^n$  and $X$,$Y$ are non-intersecting regions in $\mathbb{R}^n$, $X \cap Y = 0$. $P_1$ and $P_2$ are the matrices depicting the ellipsoids used in the optimization. 

In case the Euclidean distance metric is used for minimization of the path length, then $P_0$ is the identity matrix. The modified equation is,
\begin{equation}
\begin{array}{c c c}
minimize & f(\mathbf{x})~=~ \parallel \mathbf{x}-\mathbf{y} \parallel^2 \\ 
subject~to & \mathbf{x}^TP_1\mathbf{x} \leq 1 & \mathbf{x} \in X\\
 & \mathbf{y}^TP_2\mathbf{y} \leq 1 & \mathbf{y} \in Y
\end{array}
\end{equation}

Suppose one end point in the shortest path is known and fixed as $\mathbf{a}$. Then, Eq (5) can be reformulated as,
\begin{equation}
\begin{array}{c c c}
minimize & f(\mathbf{x})~=~ \parallel \mathbf{a}-\mathbf{x} \parallel^2 \\ 
subject~to & \mathbf{x}^TP_1\mathbf{x} \leq 1 & \mathbf{x} \in X\\
\end{array}
\end{equation}
 
Figure \ref{theoryexample} shows the ellipsoid with the covariance matrix $P_1$ with points $\mathbf{x}$ (inside), $\mathbf{x}_{B}$ (on the boundary) and $\mathbf{a}$ (outside). The boundary point $\mathbf{x}_{B}$ is nearest to the point $\mathbf{a}$ outside the ellipsoid. We need to determine the unknown $\mathbf{x}_{B}$ by the minimization of $f(\mathbf{x})$.


\subsection{Solution using PSO}
The novelty of this paper is the application of particle swarms for solving the QCQP problem. Particle swarms are deployed within the ellipsoid to determine $\mathbf{x}_{B}$. The function $f(\mathbf{x})$ is evaluated for each particle in the search space. The particle with the minimum $f(\mathbf{x})$ is considered as the closest to the point $\mathbf{a}$. Only one particle of the swarm is shown in Figure \ref{theoryexample} for ease of representation. 

\begin{figure}[!ht]
\centering
\includegraphics[width=12cm]{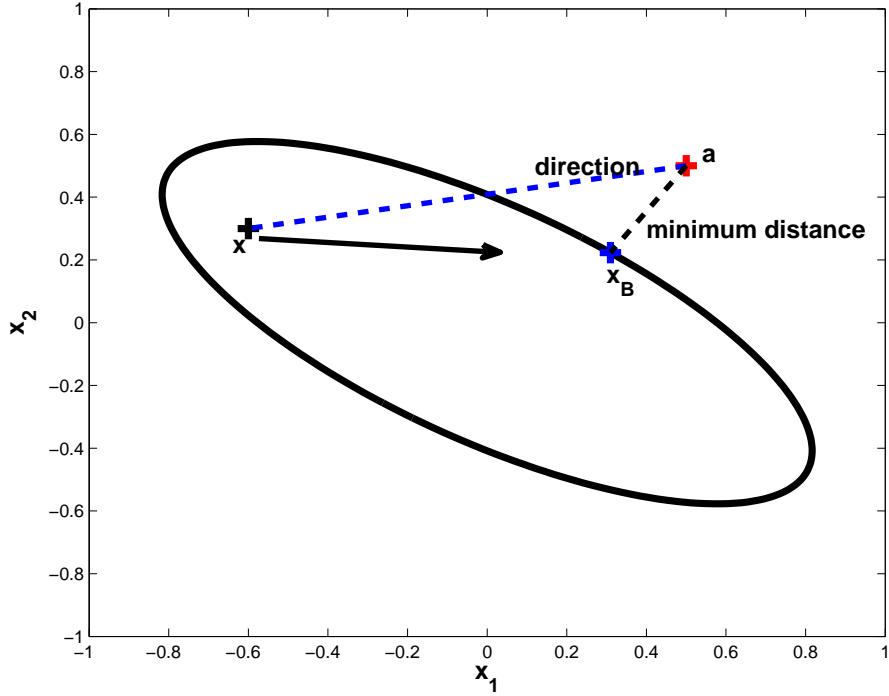}
\caption{An ellipsoid with a particle at a point $\mathbf{x}$ inside and a point $\mathbf{x}_B$ on its boundary nearest to the point $\mathbf{a}$ outside it (boundary point on the other ellipsoid). The dotted lines connecting the point $\mathbf{a}$ to the points $\mathbf{x}$ and $\mathbf{x}_B$ are shown. The desired direction of movement from $\mathbf{x}$ is also shown by an arrow, which is required to reach point $\mathbf{x}_B$.}
\label{theoryexample} 
\end{figure}

PSO is a stochastic evolutionary algorithm, which takes several generations to reach the optimal value and its performance depends on the initialization. The velocity update equation of the PSO algorithm is modified by including the function $f(\mathbf{x})$. The addition of evaluation function restricts the particle from moving away from the actual course towards the global best position. This addition provides an advantage in computation. However, it also constrains the particle to move in a particular direction. To counter this effect, we add a craziness term in the velocity update equation. The modified velocity update equation is:

\begin{equation}
\mathbf{v}_i^{j} = w\mathbf{v}_i^{j-1} + c_1r_1(\mathbf{x}_{bi} - \mathbf{x}_i^{j-1}) + c_2r_2(\mathbf{x}_{bg} - \mathbf{x}_i^{j-1}) + c_3r_3(\mathbf{a}-\mathbf{x}_i^{j-1}) + c_4r_4
\end{equation}
where $c_3$ and $c_4$ are constants, and $r_3$ is a random value that is varied during each iteration. The value of $c_3$ is chosen such that the term $(\mathbf{a}-\mathbf{x}_i^{j-1})$ does not take the particle outside the ellipsoid. $r_4$ is a random point on the surface of a sphere of dimension $n$ with randomly varying radius. $c_4r_4$ forms the craziness term.

\newtheorem{theorem}{Theorem}
\begin{theorem}
The velocity vector $\mathbf{v}$ of particle $\mathbf{x}$ should be directed towards the minimization of $f(\mathbf{x})$.
\end{theorem}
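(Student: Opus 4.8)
The plan is to reduce the statement to a first-order descent condition on the objective $f(\mathbf{x})=\parallel\mathbf{a}-\mathbf{x}\parallel^{2}$ and then to verify that the term added in the modified velocity update of Eq.~(7) realizes that condition. First I would compute the gradient of the Euclidean objective, obtaining $\nabla f(\mathbf{x})=2(\mathbf{x}-\mathbf{a})$, so that the direction of steepest decrease is $-\nabla f(\mathbf{x})=2(\mathbf{a}-\mathbf{x})$, pointing from the current particle position $\mathbf{x}$ straight toward the fixed external point $\mathbf{a}$. This already identifies the sense in which the velocity \emph{ought} to be directed.

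Next I would make the phrase \emph{directed towards the minimization} precise through a directional-derivative argument. For a small displacement along a velocity $\mathbf{v}$, the first-order Taylor expansion gives $f(\mathbf{x}+\epsilon\mathbf{v})=f(\mathbf{x})+\epsilon\,\nabla f(\mathbf{x})^{T}\mathbf{v}+O(\epsilon^{2})$, so the step strictly decreases $f$ precisely when $\nabla f(\mathbf{x})^{T}\mathbf{v}<0$, i.e. when $\mathbf{v}$ has a strictly positive component along $(\mathbf{a}-\mathbf{x})$. I would then isolate the contribution $c_{3}r_{3}(\mathbf{a}-\mathbf{x}_i^{j-1})$ in the update: since $c_{3},r_{3}>0$, this is a positive multiple of the steepest-descent direction and therefore biases $\mathbf{v}$ to satisfy the descent inequality.

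The hard part will be that the full velocity in Eq.~(7) is not this descent term alone: it also carries the inertial term $w\mathbf{v}_i^{j-1}$, the personal- and global-best attractions, and the craziness term $c_{4}r_{4}$, any of which may individually point in an ascent direction. I therefore expect the crux to be a scale/averaging argument. On the one hand, invoking the stated requirement that $c_{3}$ is chosen so the step does not leave the ellipsoid and taking $c_{3}$ large enough relative to the other coefficients keeps $\nabla f(\mathbf{x})^{T}\mathbf{v}<0$ deterministically. On the other hand, taking expectation over the random scalars and over the random point $r_{4}$ on the sphere, the symmetric contributions cancel while only the descent term retains a bias, so that $\mathbb{E}[\nabla f(\mathbf{x})^{T}\mathbf{v}]<0$; either route delivers a net decrease of $f$.

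Finally I would confirm that the construction is consistent with the constrained optimum. Writing the active constraint as $g(\mathbf{x})=\mathbf{x}^{T}P_{1}\mathbf{x}-1$, the KKT stationarity $\nabla f(\mathbf{x}_B)+\mu\,\nabla g(\mathbf{x}_B)=0$ gives $(\mathbf{a}-\mathbf{x}_B)=\mu P_{1}\mathbf{x}_B$, so at the boundary minimizer the descent direction is parallel to the outward ellipsoid normal. This shows that steering $\mathbf{v}$ along $(\mathbf{a}-\mathbf{x})$ drives the swarm exactly to $\mathbf{x}_B$, which closes the argument.
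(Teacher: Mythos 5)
Your core argument is correct, but it takes a genuinely different route from the paper. The paper's proof is purely algebraic and never mentions gradients: it substitutes the position update into the objective, writing $f(\mathbf{x}_i^j) = \parallel \mathbf{a} - \mathbf{x}_i^{j-1} - \mathbf{v}_i^{j} \parallel$, and observes that the single choice $\mathbf{v}_i^{j} = \mathbf{a} - \mathbf{x}_i^{j-1}$ drives this to zero, the unconstrained global minimum; hence the velocity ``should'' point along $\mathbf{a} - \mathbf{x}_i^{j-1}$. Your steepest-descent/directional-derivative formulation arrives at the same direction $(\mathbf{a}-\mathbf{x})$ but is sharper in one respect: it characterizes the entire half-space of velocities that decrease $f$ to first order ($\nabla f(\mathbf{x})^T\mathbf{v} < 0$) rather than exhibiting one ideal velocity, and it thereby explains why a \emph{scaled} term $c_3r_3(\mathbf{a}-\mathbf{x}_i^{j-1})$, not a full step to $\mathbf{a}$, is a sensible addition to Eq.~(7). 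What the paper's substitution buys is simplicity and exactness (it identifies the minimizing velocity outright); what yours buys is a precise meaning for ``directed towards minimization.''

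The part you flag as the crux, however, is where the proposal goes wrong, and it is also a burden the paper never takes on. The paper does not claim that the full update (7) is a descent direction; the inertia, best-attraction and craziness terms are motivated heuristically (the craziness term is explicitly introduced to \emph{counteract} the directional bias, not to respect it). Both of your proposed resolutions fail as stated. The averaging route is incorrect: only the craziness term $c_4r_4$ is plausibly zero-mean; with $r_1, r_2$ uniform on $[0,1]$, the attraction terms have means $\frac{c_1}{2}(\mathbf{x}_{bi}-\mathbf{x}_i^{j-1})$ and $\frac{c_2}{2}(\mathbf{x}_{bg}-\mathbf{x}_i^{j-1})$, which do not cancel by symmetry, and the inertia term $w\mathbf{v}_i^{j-1}$ is not random at all. (One can in fact show these attraction terms are weak descent directions for this particular $f$, since $\parallel \mathbf{a}-\mathbf{x}_{b} \parallel \, \leq \, \parallel \mathbf{a}-\mathbf{x} \parallel$ implies $(\mathbf{a}-\mathbf{x})^T(\mathbf{x}_{b}-\mathbf{x}) \geq \frac{1}{2}\parallel \mathbf{x}_{b}-\mathbf{x} \parallel^2 \geq 0$, but that is an argument you did not make, and it still leaves the inertia term uncontrolled.) The deterministic route of taking $c_3$ ``large enough'' contradicts the paper's own design constraint that $c_3$ be small enough that the term $(\mathbf{a}-\mathbf{x}_i^{j-1})$ never pushes a particle outside the ellipsoid (in the experiments $c_3=0.05$). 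Finally, your KKT computation is correct algebra and a nice consistency check absent from the paper, but stationarity characterizes $\mathbf{x}_B$; it does not prove the swarm dynamics converge there, so ``drives the swarm exactly to $\mathbf{x}_B$'' overreaches.
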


\begin{proof}
The function $f(\mathbf{x})~=~\parallel \mathbf{a}~-~\mathbf{x} \parallel^2$ needs to be evaluated. We use $\parallel \mathbf{a}~-~\mathbf{x} \parallel$
instead of using $\parallel \mathbf{a}~-~\mathbf{x} \parallel^2$ as the objective function. The value of $\mathbf{x}_{B}$ is unknown and needs to be arrived at by minimizing the function $f(\mathbf{x})$.






The value of the function for the $i^{th}$ particle at the $j^{th}$ iteration is evaluated as 
\begin{equation}
f(\mathbf{x}_i^j)~ = ~\parallel \mathbf{a}~-~\mathbf{x}_i^j \parallel
\end{equation}

Let the present position of the $i^{th}$ particle be split into the previous position and the velocity vector of the particle.

\begin{equation}
f(\mathbf{x}_i^j)~=~\parallel \mathbf{a}~-~\mathbf{x}_{i}^{j-1}~-~\mathbf{v}_{i}^{j} \parallel
\end{equation}

Here, $\mathbf{a}$ is fixed and the particle position $\mathbf{x}_{i}^{j-1}$ in the objective function is dependent on the velocity vector $\mathbf{v}_{i}^{j-1}$. So, one possible option for minimizing the function $f(\mathbf{x})$ is by changing the direction of the velocity vector of the particle as follows:

\begin{equation}
\mathbf{v}_{i}^{j} ~=~ \mathbf{a}~-~\mathbf{x}_{i}^{j-1}
\end{equation}

Thus, the function $f(\mathbf{x})$ is minimized if the direction of the velocity vector is as given by Eq (10). 
\end{proof}

The velocity vector update equation (2) does not have any term relating to minimization, but the modified Eq (7) includes the direction for minimization. It improves the convergence rate and thus reduces the computation time. The arrow (for representation purpose) shown in Figure \ref{theoryexample} is in the direction of minimizing the function $f(\mathbf{x})$ given by Eq (6).




On the assumption that one end point is known in the shortest path, particle swarms are placed in the search space of the other region and the other end point of the shortest path is determined. In order to evaluate the objective function in Eq (5), we need to determine one end point from region $X$ and the other from region $Y$. Two sets of particle swarms, one for each region, are placed within the search spaces of the respective regions. The objective function is evaluated based on the particles present in both the regions. In every iteration, the best position of a particle in one region is used as the known end point in the shortest path in the velocity update equation of the particles of the other region. The objective function reaches its optimal value after several generations.

In the process of optimization, some particles may often go out of the search space. To limit the particles within the search space, we inspect after every tentative position update as to whether the particle is lying within the search space by carrying out a check on the QCQP constraints. If the intended new position of a particle is going to violate the constraint, then its position is not updated. In other words, the particles likely to go out of the search space are redeployed back to their previous positions.

The proposed solution may be used in control system problems such as optimization of sensor networks or collaborative data mining, which are based on multiple agents or gradient projection \cite{zanella}. General consensus problem may be solved using the proposed method, where multiple agents need to reach a common goal \cite{matei,nedi2009,nedi2010}. Consensus or distributed optimization is discussed in \cite{boyd2011} as `consensus and sharing' using alternating direction method of multipliers (ADMoM). ADMoM uses one agent for each constraint. Such solvers are used for solving SVM in distributed format \cite{forero}.

\subsection{Algorithm}
Table \ref{patab1} presents a pseudo code for the algorithm. The parameters $w$,$~c_1$,$~c_2$,$~c_3$, and $~c_4$ are set to fixed values and the randomly varying parameters $r_1$,$~r_2$,$~r_3$, and $~r_4$ are updated in each iteration. The position, velocity, personal best, and global best of each particle are stored. The maximum number of iterations for the algorithm is specified as $T$ in the experiments and the size of swarm used in the algorithm is `10'. The proposed algorithm is implemented in MATLAB.

\begin{table}[!ht]
\caption{Proposed algorithm for QCQP using PSO}
\begin{center}
\begin{tabular}{l}
\hline
\textbf{Inputs:} $k=10$, $t_{max}=T$ and $f(\mathbf{x})$; set $w~=~ 0.05,~c_1~=~0.05,~c_2~=~0.05,~c_3~=$\\$0.05,~c_4~=~0.20$ and initialize parameters $\mathbf{x_i},~\mathbf{v_i}$	\\
\textbf{Outputs:} Global best value \\
\hspace{2mm}$t=0$,	\\
\hspace{2mm}\textbf{while} $t < t_{max}$	\\
\hspace{4mm}$t \leftarrow t+1$	\\
\hspace{4mm}\textbf{Function evaluation step:}	\\
\hspace{6mm}Calculate the function $f(\mathbf{x})$ for $\mathbf{x}_i$\\
\hspace{4mm}\textbf{Velocity update step:}	\\
\hspace{6mm}Randomly choose values for $r_1,~r_2,~r_3,~r_4$ in the range `0' and `1'.\\
\hspace{6mm}Then update the velocity of each particle as in Eq (7).\\
\hspace{4mm}\textbf{Position update step:}	\\
\hspace{6mm}Add updated velocity to existing position.\\
\hspace{6mm}Check constraint on the particles $\mathbf{x}^TP_1\mathbf{x}~\leq~1$.\\
\hspace{6mm}\textbf{for} $m$ = 1 to $k$	\\
\hspace{8mm}\textbf{if} $\mathbf{x}_m^TP_1\mathbf{x}_m~>~1$ \textbf{then}	\\
\hspace{10mm}$\mathbf{x}_m = \mathbf{x}_{m prev}$	\\
\hspace{8mm}\textbf{end if}	\\
\hspace{6mm}\textbf{end for}	\\
\hspace{2mm}\textbf{end while}	\\		
\hline
\end{tabular}
\end{center}
\label{patab1}
\end{table}

\section{Experiments and Results}
In this section, different optimization problems with quadratic constraints are solved using the proposed algorithm. The reliability of our algorithm is tested on linear and quadratic programming problems.

\subsection{Linear programming with quadratic constraints}
A LP problem with quadratic constraints is chosen in order to compare the convergence performance of our method with Khachiyan's and Karmarkar's methods. The LP problem is given below:
\begin{equation}
\begin{array}{c c c}
\max & f(x_1,x_2) = x_1 + x_2 & \\
subject~to & x_1^2 + x_2^2 \leq 1 & x_1,x_2 \in X
\end{array}
\end{equation}
where X is the region, satisfying the constraint.

This LP problem is reformulated as a QCQP problem:
\begin{equation}
\begin{array}{c c c}
\max & f(\mathbf{x}) = \mathbf{a}^T\mathbf{x} & \\
subject~to & \mathbf{x}^TA\mathbf{x} \leq 1 & \mathbf{x} \in X
\end{array}
\end{equation}
where vector $\mathbf{a}$ = $[1~1]^T$, $\mathbf{x}$ = $[x_1~x_2]^T$ and A is the identity matrix of size 2 (positive definite).  

\begin{figure}[!ht]
\centering
\includegraphics[width=12cm]{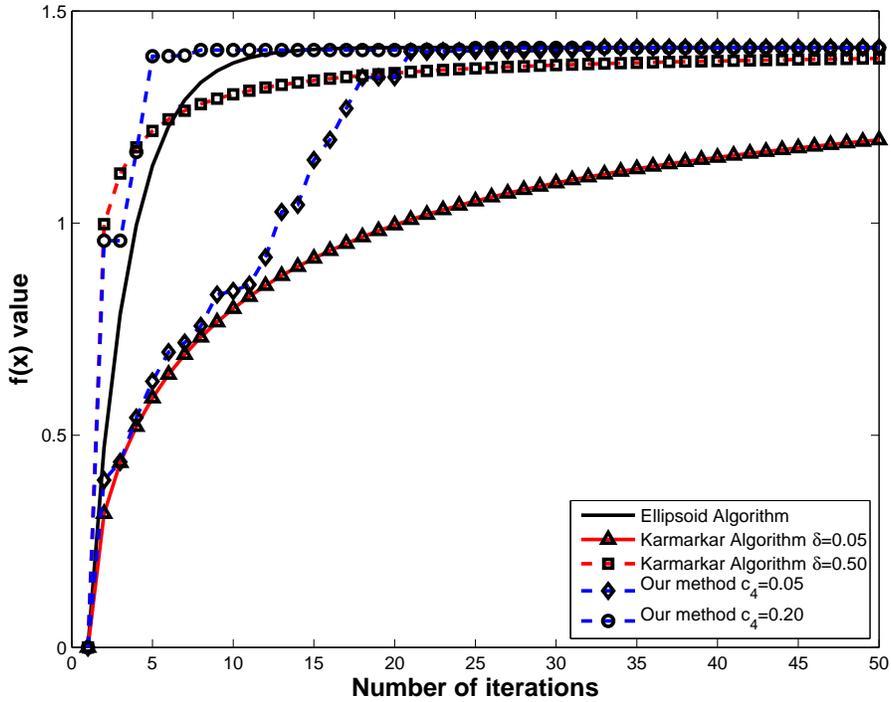}
\caption{A plot of evaluated value of $f(\mathbf{x})$  against the number of iterations for different algorithms for the LP problem given by Eq (11). At around 25 iterations, all the algorithms except Karmarkar's reach a value close to the solution. We observe that Karmarkar algorithm takes more iterations to reach the solution since the length of the direction vector decreases monotonically.}
\label{figurelp} 
\end{figure}

The solution for this LP problem is $x_1,~x_2~=~0.7071$ with $f(\mathbf{x})~=~1.4142$. This LP problem is solved using Khachiyan's ellipsoid algorithm, Karmarkar's algorithm and our method. Figure \ref{figurelp} shows the value of the function $f(\mathbf{x})$ for each iteration for a typical independent run of the experiment with 50 iterations. The length vector in Karmarkar's method is scaled by a variable $\delta$ \cite{karmarkar}. Two values are used for $\delta$ namely 0.05 and 0.50, for the evaluation of the function. As the value of $\delta$ increases, the algorithm reaches the optimal value of $f(\mathbf{x})$ in less number of iterations. Table \ref{across-methods} shows the error value of function $f(\mathbf{x})$ for all the algorithms after the $25^{th}$ iteration as shown in Figure \ref{figurelp}. The error values for the ellipsoid and our methods are less than $10$x$10^{-3}$.

\begin{table}[!ht]
\caption{Comparison of the error value of function $f(\mathbf{x})$ of our algorithm with those of two other methods for the LP problem given by Eq (11) after the $25^{th}$ iteration.}
\label{across-methods}
\centering
\begin{tabular}{|c|c|c|c|c|c|}
\hline
Algorithm $\rightarrow$ & Ellipsoid & Karmarkar  &  Karmarkar  & Our method & Our method \\
 & & $\delta=0.05$ & $\delta=0.50$ & $c_4=0.05$ & $c_4=0.20$ \\
\hline
Error value $\rightarrow$& 0.0001 & 0.3623 & 0.0494 & 0.0082 & 0.0016\\
\hline
\end{tabular}
\end{table}

In our algorithm, the values of the variables $w,~c_1,~c_2$, and $~c_3$ are set to $0.05$. These values are small so that the vector addition to position keeps the particles within the region X. Only the value of $c_4$ is varied since it scales the neighboring search space of a particle. The results for two different values of $c_4$ are shown in Figure \ref{figurelp}. 
We carried out 50 independent runs for this LP problem with two different values of $c_4$. The average, minimum, maximum and standard deviation of error value are tabulated in Table \ref{four-table}. The error values for $c_4=0.20$ are less, compared to $c_4=0.05$, indicating that as the neighboring search space is scaled to a higher value, the error value of the fitness function becomes better in fewer number of iterations.

\begin{table}[!ht]
\caption{The minimum, mean, maximum and standard deviation of error values for the function $f(\mathbf{x})$ over 50 independent runs of our algorithm for the LP problem defined by Eq (11).}
\label{four-table}
\centering
\begin{tabular}{|c|c|c|c|c|}
\hline
$c_4$ $\downarrow$ & Minimum & Mean & Maximum & Standard deviation\\
\hline
0.05 & 0.0020 & 0.0197 & 0.1346 & 0.0272 \\
\hline
0.20 & 0.0006 & 0.0182 & 0.0899 & 0.0208 \\
\hline
\end{tabular}
\end{table}

\subsection{Binary classification of simulated data}
Several variants of PSO have been applied for classification problems \cite{kennedybinclass}. Generally PSO is deployed in rule-based classification. A suitable classifier is chosen for classification; for example, a neural network \cite{kennedy}. The parameters like network weights are tuned using particle swarms. There are discrete versions of swarms, which can take a finite set of values \cite{cervantes}. Here, swarms learn the rule for classifying the test samples. In our method, binary classification is posed as QCQP and swarms are used to solve this QCQP problem. The input feature space is considered as the particle swarm space.  

\begin{figure}[!ht]
\centering
\includegraphics[width=12cm]{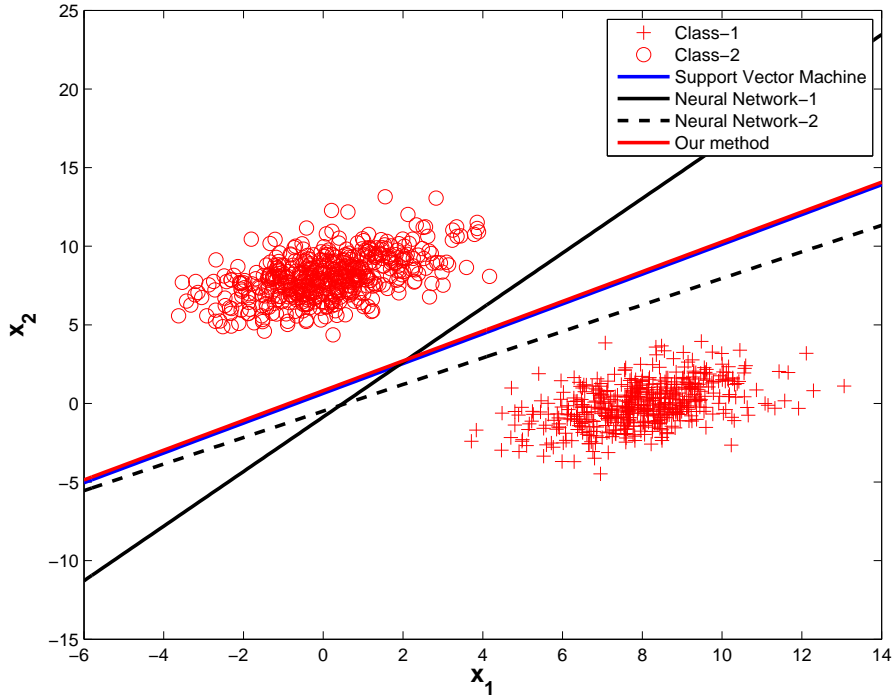}
\caption{Hyperplanes obtained by SVM, neural networks and our method are shown for a synthetic dataset for two classes. The hyperplane estimated by our method is closely aligned with that obtained by the SVM with a linear kernel. Other hyperplanes are arrived at by two neural networks (perceptron) and are not optimal.}
\label{figurebc} 
\end{figure}

 Figure \ref{figurebc} shows a simulated data set for two classes synthesized using two Gaussian distributions with means $\mathbf{\mu}_1~=~[8;0]; \mathbf{\mu}_2~=~[0;8]$ and the same covariance matrix $\Sigma_1, \Sigma_2~=~[2~1;1~2]$. A generic decision boundary for a binary classification problem is a hypersurface. The classes in the data are linearly separable thus reducing a hypersurface to a hyperplane. The equation of a hyperplane for this kind of data is given by,
\begin{equation}
z = w_1x_1 + w_2x_2 + w_0
\end{equation}
where $w_1$,$w_2$ are weights for the individual features and $w_0$ is the bias of the hyperplane. This hyperplane equation is used for classification:
\begin{equation}
\begin{array}{c}
{\normalsize If}~ z \geq 0, (x_1, x_2) \in \Omega_1\\
{\normalsize If}~ z < 0, (x_1, x_2) \in \Omega_2\\
\end{array}
\end{equation}
where $\Omega_1$ and $\Omega_2$ are the regions for the classes 1 and 2, respectively. 

The MATLAB programming platform is used to implement and test our method and also a SVM and neural network for comparison. We trained the SVM with a linear kernel and a neural network on this synthetic data. The hyperplanes learnt by the SVM and the neural network are shown in Figure \ref{figurebc}. A single layer perceptron algorithm with 100 epochs is used for training the neural network and SMO algorithm is used for training the SVM with linear kernel \cite{duda}.

We now explain the determination of the optimal hyperplane for binary classification by our method. The values of sample mean and covariance for the two classes are calculated from the simulated samples. Mahalanobis distance \cite{duda} is determined from the mean value of a class to the data points of the other class and the closest data point of the other class label is found. This closest point is used to fix the reference boundary of the search space (ellipsoid) of the other class. This process is repeated from the other class and the boundary of the first class is also determined. With the boundaries, ellipsoidal regions are formed with estimated means of classes as the centers. This is posed as a QCQP problem formulated in Eq (5) with the estimated covariance matrices normalized to boundary points as $P_1$ and $P_2$. Our algorithm is implemented by placing particle swarms near the mean value of the classes and evaluating the optimization function. The shortest path and the closest point on each boundary are estimated. 

\begin{equation}
\begin{array}{c c c}
minimize & (\mathbf{x}-\mathbf{y})^T(\mathbf{x}-\mathbf{y}) \\ 
subject~to & \mathbf{x}^T~E[(x_i-\mathbf{\mu}_1)(x_i-\mathbf{\mu}_1)^T]~~\mathbf{x}~\leq~1 & ~~\mathbf{x} \in X,~ x_i \in \Omega_1\\
 & \mathbf{y}^T~E[(y_i-\mathbf{\mu}_2)(y_i-\mathbf{\mu}_2)^T]~~\mathbf{y}~\leq~1 & ~~\mathbf{y} \in Y,~ y_i \in \Omega_2
\end{array}
\end{equation}

Eq (15) is optimized using the proposed algorithm. The intersection between the two ellipsoidal regions is assumed to be zero to eliminate a situation where particles reach to different solutions. Once the closest points on the boundaries are determined, the perpendicular bisector of the line joining the closest points is the hyperplane. The calculated hyperplane for binary classification is shown in Figure \ref{figurebc}. We can observe that the optimal hyperplane calculated by SMO algorithm and our method are closely placed, while other hyperplanes are not optimal. The estimated weight and bias values for each method are tabulated in Table \ref{table1}. The estimated weight values can be normalized as
\begin{equation}
\begin{array}{c}
w_1^1 = w_1~/~\sqrt{w_1^2 + w_2^2}\\
w_2^1 = w_2~/~\sqrt{w_1^2 + w_2^2}\\
\end{array}
\end{equation}
The normalized weights of SVM and our method are equal and they are $w_1^1=0.6875$ and $w_2^1=-0.7260$.

\begin{table}[!ht]
\caption{Weights and bias values calculated by SVM with a linear kernel, neural networks (perceptron) and our method for the binary classification problem with synthetic data.}
\label{table1}
\centering
\begin{tabular}{|c|c|c|c|}
\hline
Algorithm & $w_1$ & $w_2$ & $w_0$\\
\hline
Neural Network-1 & -13.2972 & 7.6540 & 6.5689\\
\hline
Neural Network-2 & -6.2830 & 7.4473 & 3.5954\\
\hline
SVM & 0.2718 & -0.2868 & 0.1838\\
\hline
Our method & 1.6697 & -1.7638 & 1.4376\\
\hline
\end{tabular}
\end{table}

\subsection{Performance on real datasets}
Our algorithm is also tested on some of the datasets available from UCI ML repository \cite{uciml}. The datasets used in our experiments are Iris, Wine, Pima and Thyroid. These four datasets have been chosen based on the consideration of minimal number of datasets with the maximum coverage of the different types of attributes (namely, binary, categorical value as integer, and real values). Further, the number of classes in each case is 2 or 3. So, the maximum number of hyperplanes to be obtained for any of these datasets is limited to three. The main characteristics of these datasets are tabulated in Table \ref{table-char}. The cross-validation performance on these datasets is compared with those of a SVM with linear and RBF kernel, a neural network, and GSVM. GSVM \cite{gonzalez} estimates a classification hyperplane similar to the proposed approach and is developed on the fundamentals of optimizing the SVM problem. GSVM modifies the bias ($w_0$) obtained from SVM by moving the hyperplane such that the geometric mean of recall and precision is improved.

\begin{table}[!ht]
\caption{The characteristics of datasets chosen for experimentation from UCI ML repository \cite{uciml} with varied nature of attributes.}
\label{table-char}
\centering
\begin{tabular}{|c|c|c|c|c|}
\hline
Dataset & No. of & No. of & No. of & Nature of\\
 & samples & Attributes & classes & attributes\\
\hline
Iris & 150 & 4 & 3 & real\\
\hline
Wine & 178 & 13 & 3 & real \& integer\\
\hline
Pima & 768 & 8 & 2 & real \& integer\\
\hline
Thyroid & 215 & 5 & 3 & real \& binary\\
\hline
\end{tabular}
\end{table}

\subsubsection{Iris Dataset}
The Iris dataset consists of four different measurements on 150 samples of iris flower. There are 50 samples of three different species of iris forming the dataset. The features, whose values are available from the dataset, are length and width of leaves and petals of different iris plants. Out of the three species, two are not linearly separable from each other, whereas the third is linearly separable from the rest of the species. The classification task is to determine the species of the iris plant, given the 4-dimensional feature vector.

\subsubsection{Wine Dataset}
The Wine dataset contains the different physical and chemical properties of three different types of wines derived from three different strains of plants. Some of the physical properties such as hue and colour intensity have integer values, whereas chemical properties such as ash or phenol content have real values. The feature vector has 13 dimensions and there are a total of 178 samples. The classification task is to determine the type of wine, given the values of the content of the thirteen physical and chemical components.

\subsubsection{Pima Indians Diabetes Dataset}
The Pima dataset contains eight different parameters measured from 768 adult females of Pima Indian heritage. Once again, some of them are integer valued, such as age and number of pregnancies. Certain other parameters, such as serum insulin, are real valued. It is a two-class classification problem of identifying normal and diabetic subjects, given the 8-dimensional feature vector as input.

\subsubsection{Thyroid Disease Dataset}
This dataset contains ten distinct databases of different dimensions. The particular database chosen for our study contains 5 different parameters measured from 215 individuals. Some of the variables have binary values, while others have real values. The classification task is to assign an individual to one of 3 classes, given the 5-dimensional feature vector as input.

\subsubsection{Data projection}
\label{dataproj}
We perform a preprocessing step on these real datasets. This step is necessary since,
\begin{itemize}
\item Some of the attributes of the dataset have larger variance than others. This may result in skewed ellipsoid formation at the mean value of the dataset.
\item The number of samples available in the datasets is also less.
\end{itemize}

To overcome these two problems, we perform the two steps given below.
\begin{itemize}
\item Eigen value decomposition is performed on dataset covariance matrix. The dataset is projected on to these eigen vectors. Scaling is performed in such a way that each component in the new projected dataset has unit variance.  
\item This step is performed independently on each of the subsets used in the cross-validation stage. We project the subset of samples into two dimensions. First is the direction of the vector joining the sample means of the classes. The equation for this vector is,
\begin{equation}
p~\alpha~(\mathbf{\mu}_1~-~\mathbf{\mu}_2) 
\end{equation}
where $p$ is the projection vector, $\mathbf{\mu}_1$ and $\mathbf{\mu}_2$ are the sample means of classes-1 and 2, respectively. The second direction is that of the eigen vector corresponding to the largest eigen value of the covariance matrix of the subset. 
\end{itemize}

The projected samples are used in the estimation of new sample means and covariance matrices. The estimated hyperplane is used to classify the test samples. 

\subsubsection{Cross-validation}
These datasets do not have separate training and test samples; hence we perform cross-validation. In cross-validation, a small subset is used for testing while the remaining are used as training samples. We use ten fold cross-validation: split the datasets into ten subsets and use one of them for testing and the others for training at a time and then rotate the subsets. Equation (15) is used in the estimation of the hyperplane, which in turn is used to classify the test samples. Ten trials are performed and the average cross-validation errors are reported in Table 6. 

\begin{table}[!ht]
\caption{Cross-validation (CV) error (in \%) using our method for different datasets from UCI ML repository \cite{uciml} compared with those reported in \cite{gonzalez} and our own implementation of SVM and neural network.}
\label{table2}
\centering
\begin{tabular}{|c|c|c|c|c|c|}
\hline
Dataset & CV error by & CV error & CV error in & CV error in & CV error\\
 & Our method & in SVM with & Neural & SVM with RBF & in GSVM\\
 & & linear kernel & Network & kernel \cite{gonzalez} & \cite{gonzalez}\\
\hline
Iris & \textbf{2.20} & 3.20 & 3.60 & 4.21 & 3.92\\
\hline
Wine & 1.06 & 1.22 & 1.56 & 1.60 & \textbf{0.93}\\
\hline
Pima & 25.15 & \textbf{23.06} & 64.78 & 24.64 & 25.85\\
\hline
Thyroid & 4.32 & 3.04 & 9.20 & 2.05 & \textbf{1.73}\\
\hline
\end{tabular}
\end{table}

The performance of our method is close to that of the variants of SVM and is superior to that of the neural network. We notice that in Iris and Wine datasets, the cross-validation error obtained by our technique is better than those achieved by SVM with linear and RBF kernels and also the neural network. In Pima and Thyroid datasets, the cross-validation error is high, due to the high degree of correlation. 

\section{Conclusion and Future work}
We have developed a classification method and optimization algorithm for solving QCQP problems. The novelty in this method is the application of particle swarms, an evolutionary technique, for optimization. The results indicate that our approach is a possible method in solving general QCQP problems without gradient estimation. We have shown the results of our algorithm under quadratic constraints by evaluating different optimization functions. The issue with PSO based methods is their computational complexity and the need for parameter tuning. The number of function evaluations linearly increases with the number of particles employed and the number of iterations carried out.

In future, we intend to learn multiple hyperplanes by placing multiple kernels in each class and evaluating the performance against multiple-kernel learning algorithms. The hyperplanes estimated for different kernels may reduce the cross-validation error for the Pima and Thyroid datasets.




\end{document}